\documentclass{article}

\PassOptionsToPackage{round, compress}{natbib}

 \usepackage[preprint]{neurips_2026}

\usepackage[utf8]{inputenc} 
\usepackage[T1]{fontenc}    
\usepackage{hyperref}       
\usepackage{url}            
\usepackage{booktabs}       
\usepackage{amsfonts}       
\usepackage{nicefrac}       
\usepackage{microtype}      

\usepackage{tikz}
\usetikzlibrary{arrows.meta}
\definecolor{grad1}{RGB}{0, 90, 181}
\definecolor{grad2}{RGB}{200, 50, 50}
\definecolor{dotcolor}{RGB}{40, 40, 40}
\usepackage{todonotes}
\usepackage{enumitem}
\usepackage{bbm}
\usepackage{amsmath}
\usepackage{amssymb}
\usepackage{wrapfig}
\usepackage{float}
\usepackage{xcolor}
\definecolor{dred}{RGB}{153,80,43}
\definecolor{dblue}{RGB}{0,114,178}
\hypersetup{
    colorlinks=true,
    breaklinks=true,
    urlcolor=dblue, 
    linkcolor=dred, 
    citecolor=dblue
}
\usepackage{caption}

\usepackage{amsthm}
\usepackage{thmtools}

\declaretheorem[style=plain]{question}
\declaretheorem[style=plain]{corollary}

\declaretheorem[style=plain]{definition}

\declaretheorem[style=definition]{counterexample}

\newtheorem{skalsetheorem}{Theorem}

\makeatletter
\renewcommand{\paragraph}{%
  \@startsection{paragraph}{4}{\z@}%
                {0ex}%
                {-1em}%
                {\normalsize\bf}%
}
\makeatother

\usepackage{proof-at-the-end}
\pgfkeys{/prAtEnd/global custom defaults/.style={restate, no link to proof}}

\usepackage[capitalize,noabbrev,nameinlink]{cleveref}
\creflabelformat{equation}{#2#1#3}
\crefname{appendix}{Appendix}{Appendices}
\Crefname{appendix}{Appendix}{Appendices}
\crefname{skalsetheorem}{Theorem}{Theorems}
\Crefname{skalsetheorem}{Theorem}{Theorems}
\crefname{conjecture}{Conjecture}{Conjectures}
\Crefname{conjecture}{Conjecture}{Conjectures}
\crefname{question}{Question}{Questions}
\Crefname{question}{Question}{Questions}
\crefname{counterexample}{Counterexample}{Counterexamples}
\Crefname{counterexample}{Counterexample}{Counterexamples}

\newcommand{\E}{\mathbb{E}}
\newcommand{\cS}{\mathcal{S}}
\newcommand{\cA}{\mathcal{A}}
\newcommand{\cR}{\mathcal{R}}
\newcommand{\cT}{\mathcal{T}}
\newcommand{\cM}{\mathcal{M}}
\newcommand{\cF}{\mathcal{F}}

\newcommand{\nJ}{\nabla J}

\newcommand{\dPi}{\hat{\Pi}}

\newcommand{\nsPi}{{\Pi_\text{NS}}}

\title{Imperfect World Models are Exploitable}

\author{%
  \textbf{Logan Mondal Bhamidipaty}\textsuperscript{1}\thanks{\texttt{l.m.bhamidipaty@ed.ac.uk}} \quad
  \textbf{Esmeralda S.\ Whitammer}\textsuperscript{1} \quad
  \textbf{David Abel}\textsuperscript{1} \\[1.5mm]
  \textbf{Mykel J.\ Kochenderfer}\textsuperscript{2} \quad
  \textbf{Subramanian Ramamoorthy}\textsuperscript{1} \\[3mm]
  \textsuperscript{1}University of Edinburgh \qquad
  \textsuperscript{2}Stanford University
}

\begin{document}

\maketitle

\begin{abstract}
We propose a novel definition of model exploitation in reinforcement learning. Informally, a world model is exploitable if it implies that one policy should be strictly preferred over another while the environment’s true transition model implies the reverse. We analogize our definition with a prior characterization of reward hacking but show that the associated proof of inevitability does not transfer to exploitation. To overcome this obstruction, we develop a general theory of reward hacking and model exploitation that proves that exploitation is essentially unavoidable on large policy sets and yields the corresponding claim for hacking as a special case. Unfortunately, we also find that the conditions that guarantee unhackability in finite policy sets have no counterpart that precludes exploitation. Consequently, we introduce a relaxed notion of exploitation and derive a safe horizon within which it can be avoided. Taken together, our results establish a formal bridge between reward hacking and model exploitation and elucidate the limits of safe planning in world models.
\end{abstract}

\section{Introduction}

Efficient real-world planning typically requires approximating complex dynamics \citep{simon1955behavioral, javed2024big}. In the best case, well-chosen approximations unravel otherwise intractable problems, as with admissible heuristics in search \citep{hart1968formal}, model reduction in dynamical systems \citep{antoulas2005approximation}, and latent-space world models in sequential decision-making \citep{hafner2019learning}. In the worst case, however, ill-chosen approximations can compromise safety, as with mimicry in evolutionary biology \citep{bates1862mimicry}, arbitrage in financial markets \citep{ross1976arbitrage}, and speedrunning in video games \citep{scully2014practiced}. In reinforcement learning (RL), this liability arises not only exogenously from adversaries but also endogenously from the optimizer itself: an agent trained to maximize total expected reward under an imperfect model\footnote{We use ``model'' to mean the transition model of a Markov decision process (\Cref{sec:prelims}) or an agent's approximation to it, sometimes called a world model \citep{ha2018worldmodels}. We do not use it to mean the policy (as a generative model of actions) as is sometimes meant in other RL literature.} may discover behavior that performs well in simulation but poorly in reality \citep{jakobi1995noise, tobin2017domain}. This failure mode is called \textbf{model exploitation} \citep{ha2018worldmodels, janner2019trust}, and a natural question is whether it can be avoided.

This question pertains to notable results in RL theory including the simulation lemma \citep{kearns2002near} and the value equivalence principle \citep{grimm2020value}. Both implicitly use predictive performance to measure the quality of a model's approximation. The simulation lemma does so cardinally by bounding the error from optimizing under an imperfect model, while the value equivalence principle does so locally by characterizing when two models induce identical Bellman updates. Modern world models also emphasize predictive performance, typically learning latent-space models of transitions \citep{hafner2019learning, balestriero2025lejepa}.

We argue that while this emphasis on performance is necessary for efficient planning, it is not sufficient for safe planning. Recent work on reward specification \citep{hadfield2017inverse, abel2021expressivity} views reward functions as observations of the designer's intent to communicate a goal, which may be an imperfect reflection of a true objective. Empirical transition models are similarly imperfect. In this setting, the natural notion of safety is ordinal rather than cardinal (\textit{does the model preserve which policies are better than which?}) and global rather than local (\textit{does it do so across the entire policy set, not just on the chosen classes?}). Concretely, a proxy model is \emph{safe} in our sense if it does not invert the policy ordering induced by the true model. That is, whenever reality prefers a policy $\pi$ over $\pi'$, a safe approximation should not prefer $\pi'$ over $\pi$. When such an inversion exists, we say the pair of transition models is \emph{exploitable} (\Cref{def:exploitability}).

Our formalization of model exploitation is analogous to the notion of reward hacking in \citet{skalse2022defining}. Unfortunately, their characterization of when hacking exists and is avoidable does not transfer to exploitation (\cref{sec:Distinguishing}). We overcome this by introducing a more general theory, from which four main contributions emerge.
\begin{enumerate}[leftmargin=*, itemsep=2pt, topsep=4pt]
    \item We propose a new definition of model exploitation (\cref{def:exploitability}) that, to the best of our knowledge, is the first ordinal notion of safety for world models.
    \item We develop a unified theory of reward hacking and model exploitation that characterizes exploitation on common policy sets (\cref{cor:exploitability,cor:subsets}), yields \hyperref[thm:skalse-thm1]{Theorem~1} of \citet{skalse2022defining} as a special case (\cref{cor:skalse}), and provides a shared geometric intuition for both phenomena (\cref{lem:exploitation,lem:equivalence}).
    \item We give a construction reducing any instance of model exploitation to one of reward hacking (\cref{prop:exploitation-implies-hacking}), but show that a converse construction does not exist (\cref{ce:hacking-exploition}). We further prove that the conditions that guarantee unhackability are insufficient to guarantee unexploitability (\cref{ce:finite-exploitation}).
    \item We introduce a relaxed notion of model exploitation, $\varepsilon$-exploitability (\cref{def:eps-exploitability}), and use the tight form of the simulation lemma \citep{kearns2002near} recently proved by \cite{lobeloptimal} to derive a safe horizon for discounted Markov decision processes within which $\varepsilon$-unexploitability is guaranteed (\cref{thm:safe-horizon}).
\end{enumerate}

\section{Preliminaries}
\label{sec:prelims}

We first recap the preliminaries for reinforcement learning \citep{sutton1998reinforcement} and introduce our definition of model exploitation. 

\subsection{Reinforcement learning} We consider a \textbf{Markov decision process (MDP)} $\cM = (\cS, \cA, \cT, d_0, \cR, \gamma)$ where $\cS$ is the state space, $\cA$ is the action space, $\cT \colon \cS \times \cA \to \Delta(\cS)$ is the transition model, $d_0 \in \Delta(\cS)$ is the initial state distribution, $\cR \colon \cS \times \cA \to \mathbb{R}$ is the reward function, and $\gamma \in [0, 1)$ is the discount factor. As in \citet{skalse2022defining}, we assume finite $\cS$ with all states reachable and finite $\cA$ with $|\cA| > 1$.

A \textbf{stationary policy} $\pi \colon \cS \to \Delta(\cA)$ maps each state to a distribution over actions. A \textbf{non-stationary policy} $\pi = (\pi_0, \pi_1, \pi_2, \dots)$ is a sequence of mappings $\pi_t \colon \cS \to \Delta(\cA)$, allowing the action distribution to depend on the time step. Rolling out a policy produces a trajectory $\tau = (s_0, a_0, r_0, \dots)$ whose discounted return is $G(\tau) = \sum_{t=0}^\infty \gamma^t r_t$. The \textbf{value} of a policy is its expected return $J(\pi) = \E_{\tau \sim \pi}[G(\tau)]$, or, equivalently, $J(\pi) = \langle \cR, \cF^\pi_\cT \rangle$ where $\cF^\pi_\cT(s, a) = \E_{\tau \sim \pi}[\sum_{t=0}^\infty \gamma^t \mathbbm{1}(s_t = s, a_t = a)]$ are the \textbf{discounted visit counts} of $\pi$ under $\cT$. We make the standard assumption in classical RL and optimal control \citep{sutton1998reinforcement, bertsekas2012dynamic} that designers use value to compare policies and prefer high-value policies over low-value ones.

An \textbf{environment} is an MDP without a reward function $(\cS, \cA, \cT, d_0, \_, \gamma)$,\footnote{This is sometimes called a controlled Markov process.} and a \textbf{task} is an MDP without a transition model $(\cS, \cA, \_, d_0, \cR, \gamma)$. When considering multiple reward functions in a fixed environment, we write $J_\cR(\pi) = \langle \cR, \cF^\pi \rangle$ (dropping the subscript on $\cF^\pi$ when $\cT$ is fixed). When considering multiple transition models in a fixed task, we write $J_\cT(\pi) = \langle \cR, \cF^\pi_\cT \rangle$. We use $J_i$ as shorthand for $J_{\cR_i}$ or $J_{\cT_i}$ depending on whether the index ranges over reward functions or transition models.

Since $\cS$ and $\cA$ are finite, the space of stationary policies can be represented as a product of $|\cS|$ probability simplices $\Delta(\cA)^{|\cS|}$. We write $\Pi^+$ for the interior of this space, consisting of all policies with $\pi(a\mid s) > 0$ for every $s \in \cS$ and $a \in \cA$. Unless otherwise specified, openness is taken with respect to $\Pi^+$. We provide a reference list of notation and terminology in \cref{app:notation_table}.

\subsection{Defining model exploitation and reward hacking}

\begin{figure}[t]
    \vspace*{-1em}
  \centering
  \includegraphics[width=\textwidth]{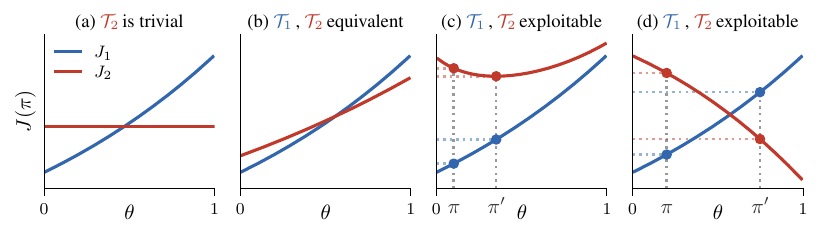}\\[-1em]
  \caption{Taxonomy of transition model relationships in a 3-state MDP
  (\cref{app:figure-mdp}).
  Policies are parameterized by $\pi_\theta(a_0 \mid s) = \theta$ for all~$s \in \cS$. Each panel has the same \textcolor{grad1}{$\cT_1$} but a unique \textcolor{grad2}{$\cT_2$}.
  \textbf{(a)}~\textcolor{grad2}{$\cT_2$} is trivial, so \textcolor{grad2}{$J_2$} is constant.
  \textbf{(b)}~\textcolor{grad1}{$\cT_1$},
  \textcolor{grad2}{$\cT_2$} are equivalent (in policy ordering): both value functions increase
  in~$\theta$, preserving the policy ordering.
  \textbf{(c, d)}~\textcolor{grad1}{$\cT_1$},
  \textcolor{grad2}{$\cT_2$} are exploitable: dotted lines show the
  exploiting pair $(\pi, \pi')$ from \cref{def:exploitability}, with
  \textcolor{grad1}{$J_1(\pi') > J_1(\pi)$} but
  \textcolor{grad2}{$J_2(\pi) > J_2(\pi')$}.}
  \label{fig:taxonomy}
\end{figure}

We define model exploitation as a binary relation between two transition models.

\begin{definition}[Model Exploitation]
\label{def:exploitability}
Transition functions $\cT$ and $\cT'$ are \textbf{exploitable} relative to a policy set $\Pi$ and a task $(\cS, \cA, \_, d_0, \cR, \gamma)$ if there exist $\pi, \pi' \in \Pi$ such that
\begin{equation*}
    J_\cT(\pi) > J_\cT(\pi') \quad \text{and} \quad J_{\cT'}(\pi') > J_{\cT'}(\pi),
\end{equation*}
otherwise they are \textbf{unexploitable}.
\end{definition}

Intuitively, model exploitation says that optimizing under one transition model may look like a mistake under the other. This definition directly analogizes the notion of reward hacking in \citet{skalse2022defining} with transition models in place of reward functions. We restate their definition below.

\begin{definition}[Reward Hacking, \citet{skalse2022defining}]
\label{def:hackability}
Reward functions $\cR$ and $\cR'$ are \textbf{hackable} relative to a policy set $\Pi$ and an environment $(\cS, \cA, \cT, d_0, \_, \gamma)$ if there exist $\pi, \pi' \in \Pi$ such that
\begin{equation*}
    J_\cR(\pi) > J_\cR(\pi') \quad \text{and} \quad J_{\cR'}(\pi') > J_{\cR'}(\pi),
\end{equation*}
otherwise they are \textbf{unhackable}.
\end{definition}

Both definitions are instances of a common primitive, which we call \textbf{value inversion}. We say two value functions $J_1$ and $J_2$ admit a \textbf{value inversion} on $\Pi$ if there exist $\pi, \pi' \in \Pi$ such that $J_1(\pi) > J_1(\pi')$ and $J_2(\pi') > J_2(\pi)$. Thus, model exploitation is a value inversion between $J_\cT$ and $J_{\cT'}$, and reward hacking is a value inversion between $J_\cR$ and $J_{\cR'}$.

We further say $J_1$ and $J_2$ are \textbf{equivalent} on $\Pi$ if they induce the same ordering on $\Pi$ and that a value function $J$ is \textbf{trivial} on $\Pi$ if $J(\pi)$ is constant on $\Pi$ (\cref{fig:taxonomy}). Informally, equivalence means that two value functions agree on every pairwise comparison between policies, while triviality means there are no meaningful comparisons to make. We also use trivial and equivalent to describe transition models and reward functions as shorthand for the corresponding properties of the value functions they induce. We note that triviality precludes value inversion and that value inversion is symmetric, irreflexive, and not necessarily transitive.

 
 \section{Results}
 \label{sec:results}

 To understand when model exploitation is avoidable, we ask a single question with increasing precision: \textit{on which policy sets can we find non-trivial, non-equivalent, unexploitable transition model pairs?} We begin with the largest possible policy set and progressively tighten the scope.

\subsection{Non-stationary policies}

 Consider first the set of all non-stationary policies $\nsPi$. If exploitation can be avoided here, it can be avoided on any policy set, since all policy sets are subsets of $\nsPi$. Unfortunately, this is false: we find that no interesting unexploitability exists.

\begin{theoremEnd}{proposition}[No unexploitability on $\nsPi$]
\label{prop:non-stationary}
    On the set of all non-stationary policies, every non-trivial, non-equivalent pair of transition models is exploitable.
\end{theoremEnd}
\begin{proofEnd}
    The proof follows the same argument as the analogous result 
    in \citet{skalse2022defining}, with transition models in 
    place of reward functions. The key observation is that 
    $J_i(\pi_\lambda)$ is affine in $\lambda$ by linearity of 
    expectation, regardless of whether $i$ indexes reward 
    functions or transition models.
\end{proofEnd}

We defer all formal proofs to \cref{app:proofs}. The proof uses a similar construction to the one in \citet{skalse2022defining}, relying on the richness of $\nsPi$ to find an exploiting policy pair, but says nothing about the set of stationary policies, where the question is considerably harder.

\subsection{Distinguishing model exploitation from reward hacking}
\label{sec:Distinguishing}

Having seen that interesting unexploitability is impossible in the most general policy set, a natural question is whether it can be avoided in smaller policy sets, such as deterministic or optimal policies. From \hyperref[thm:skalse-thm1]{Theorem~1} of \citet{skalse2022defining}, we know that reward hacking is inevitable on any policy set that contains an open subset (under mild regularity conditions). Given that our proof of \cref{prop:non-stationary} follows a similar construction to the one in \citet{skalse2022defining}, it is reasonable to expect that we might derive analogous results using the same or comparable techniques. Unfortunately, this is not true, as we argue in this section.

\subsubsection{Distinguishing transition models and reward functions}
\label{sec:anatomy}

Transition models and reward functions have significantly different geometries that make it hard to reason about them in a unified manner. Two of these differences are important for our results. \textit{First}, the policy value function $J(\pi) = \langle \cR, \cF_\cT^\pi \rangle$ is linear in the reward function but nonlinear in the transition model's probability mass function. This means that the linear-algebraic and real-analytic machinery that \citet{skalse2022defining} develop for their characterization of reward hacking generally does not transfer to the model exploitation setting, so analogous results (if they exist) must rely on different methods. \textit{Second}, the only constraint imposed on the reward function is typically boundedness: rewards may not go to infinity. Transition models, on the other hand, are probability mass functions, which are nonnegative and sum to 1 at each state. This means reward functions live in the full Euclidean space $\mathbb{R}^{|\cS||\cA|}$, which is closed under negation, scaling, and addition, but transition models live on a compact product of simplices $\Delta(\cS)^{|\cS||\cA|}$, which does not have vector space structure (at least in the probability domain).

The primary consequence of these differences is that neither the techniques nor the results from reward hacking cleanly apply to model exploitation. In fact, as we show next, results that are true for hacking are sometimes false for exploitation.

\subsubsection{Finite policy sets}
\label{sec:finite}

Unlike reward hacking, model exploitation persists even in finite policy sets. Formally, \hyperref[thm:skalse-thm2]{Theorem~2} of \citet{skalse2022defining} shows that, together with some mild regularity conditions, a non-trivial, non-equivalent unhackable reward pair always exists on any finite policy set. 

Making the corresponding existence guarantee for unexploitable transition model pairs requires ruling out two degeneracies that have no analogue for reward hacking. When $\gamma = 0$, the value $J(\pi) = \sum_s d_0(s) \sum_a \pi(a|s)\,\cR(s,a)$ is independent of $\cT$, so every transition model pair is equivalent. The same obstruction applies when both $\cR$ and every policy in $\Pi$ are stateless (i.e., $\cR(s,a) = \cR(a)$ and $\pi(a|s) = \pi(a)$ for all $s \in \cS$), since then $J(\pi) = \frac{1}{1-\gamma}\sum_a \cR(a)\,\pi(a)$. In both cases, no transition model pairs are non-equivalent.

In examining whether the remaining regularity conditions of \citet{skalse2022defining} suffice once these degeneracies are excluded, we noticed that \hyperref[thm:skalse-thm2]{Theorem~2} requires an additional hypothesis that is not present in the original statement but implicitly used in the proof. (We view this as a minor oversight that does not affect the broader conclusions of their work, since the condition is degenerate and easily excluded.)

\begin{theoremEnd}{proposition}[{Non-collinearity is necessary for {\expandafter\hyperref[thm:skalse-thm2]{Theorem~2}} of \citet{skalse2022defining}}]
\label{prop:collinear-visit-counts}
On any policy set with collinear visit counts (i.e., all visit counts $\cF^\pi$ for $\pi \in \Pi$ lie on a single line in $\mathbb{R}^{|\cS||\cA|}$), every non-trivial, non-equivalent pair of reward functions is hackable.
\end{theoremEnd}
\begin{proofEnd}
Collinearity means there exist $\pi_0, \pi_1 \in \Pi$ with $\cF^{\pi_0} \neq \cF^{\pi_1}$ such that $\cF^\pi = \cF^{\pi_0} + c_\pi(\cF^{\pi_1} - \cF^{\pi_0})$ for distinct scalars $c_\pi$. The value of $\pi$ under any reward $\cR$ is
\[
J_\cR(\pi) = J_\cR(\pi_0) + c_\pi \langle \cR,\; \cF^{\pi_1} - \cF^{\pi_0} \rangle,
\]
which is affine in $c_\pi$. Since the $c_\pi$ are distinct, the ordering of $\Pi$ under $\cR$ is determined entirely by the sign of $\langle \cR, \cF^{\pi_1} - \cF^{\pi_0} \rangle$. Given a non-trivial $\cR$, any $\cR'$ either has the same sign (same ordering, so equivalent), the opposite sign (reversed ordering, so hackable), or no sign ($\cR'$ trivial on $\Pi$). A non-trivial, non-equivalent pair therefore cannot be unhackable on $\Pi$.
\end{proofEnd}

We provide a corrected statement in \cref{app:skalse-theorems} with the additional hypothesis that visit counts are not collinear.  Unfortunately, even assuming all of the regularity conditions above together with those of \hyperref[thm:skalse-thm2]{Theorem~2}, this is still insufficient to guarantee the existence of a non-trivial, non-equivalent unexploitable transition model pair on finite policy sets (see \cref{ce:finite-exploitation} in \cref{app:proofs}). We suspect that the finite-policy-set case admits no clean existence guarantee for unexploitability, because the nonlinear dependence of $J$ on $\cT$ makes the geometry of the induced value orderings fundamentally richer than in the reward hacking setting.

\begin{theoremEnd}[all end]{counterexample}
\label{ce:finite-exploitation}
Consider the task $(\cS, \cA, \_, d_0, \cR, \gamma)$ with $\cS = \{s_0, s_1\}$, $\cA = \{a_0, a_1\}$, $d_0 = (1/2,\, 1/2)$, $\cR(s, a) = \mathbbm{1}\{s = s_0,\, a = a_0\} + \mathbbm{1}\{s = s_1,\, a = a_1\}$, and any $\gamma \in (0, 1)$. Let $\dPi = \{\pi_0, \pi_1, \pi_2\}$ where $\pi_0$ plays $a_0$ at both states, $\pi_1$ plays $a_0$ at $s_0$ and $a_1$ at $s_1$, and $\pi_2$ plays $a_1$ at $s_0$ and $a_0$ at $s_1$.
 
This task satisfies $\gamma > 0$, $\cR$ is non-trivial and state-action-dependent, and the visit counts of $\dPi$ are non-collinear under any transition model: $\cF^{\pi_0}$, $\cF^{\pi_1}$, and $\cF^{\pi_2}$ are supported on $\{(s_0, a_0), (s_1, a_0)\}$, $\{(s_0, a_0), (s_1, a_1)\}$, and $\{(s_0, a_1), (s_1, a_0)\}$ respectively, and no affine combination can reconcile the disjoint supports.
 
Under $\pi_1$, we see that $r^{\pi_1}(s) = 1$ for both states, so $J_\cT(\pi_1) = 1/(1 - \gamma)$ for every $\cT$. Under $\pi_2$, we see that $r^{\pi_2}(s) = 0$ for both states, so $J_\cT(\pi_2) = 0$ for every $\cT$. Under $\pi_0$, $r^{\pi_0}(s_0) = 1$ and $r^{\pi_0}(s_1) = 0$, so $J_\cT(\pi_0) = \mu^{\pi_0}_\cT(s_0)$ where $\mu^{\pi_0}_\cT(s)$ is the discounted visitation count of state $s$. Since $d_0(s_0) = 1/2$, the $t = 0$ contribution gives $\mu^{\pi_0}_\cT(s_0) \geq 1/2 > 0$. Since $d_0(s_1) = 1/2$, we also have $\mu^{\pi_0}_\cT(s_1) \geq 1/2$, so $\mu^{\pi_0}_\cT(s_0) = 1/(1-\gamma) - \mu^{\pi_0}_\cT(s_1) < 1/(1 - \gamma)$. Therefore, for every $\cT$,
\[
    \frac{1}{1 - \gamma} = J_\cT(\pi_1) > J_\cT(\pi_0) > 0 = J_\cT(\pi_2),
\]
and every pair of transition models is equivalent on $\dPi$.
\end{theoremEnd}

\begin{question}
\label{conj:finite}
Are there necessary and sufficient conditions on a task $(\cS, \cA, \_, d_0, \cR, \gamma)$ and finite policy set that guarantee the existence of a non-trivial, non-equivalent, unexploitable pair of transition models?
\end{question}

\subsection{Unifying model exploitation and reward hacking}

\looseness=-1
We next return to our main question: \textit{can exploitability be avoided by restricting the policy set?}
As discussed in \cref{sec:Distinguishing} above, \hyperref[thm:skalse-thm1]{Theorem~1} of \citet{skalse2022defining} shows that the answer is no for reward hacking on any policy set containing an open subset, but the result does not directly apply to exploitation. Surprisingly, this does not mean that reward hacking has nothing to tell us about model exploitation. In this section, we first show that every instance of model exploitation can be reduced to an instance of reward hacking, then develop this insight into a unified theory of exploitation and hacking.

\begin{theoremEnd}{proposition}[Exploitation implies hacking]
\label{prop:exploitation-implies-hacking}
For any exploitable pair $(\cT, \cT')$ relative to a policy set $\Pi$ and a task $(\cS, \cA, \_, d_0, \cR, \gamma)$, there exists a reward function $\cR'$ such that $(\cR, \cR')$ is hackable relative to $\Pi$ and the environment $(\cS, \cA, \cT, d_0, \_, \gamma)$.
\end{theoremEnd}
\begin{proofEnd}
Since $(\cT, \cT')$ is exploitable, there exist $\pi, \pi' \in \Pi$ with
$J_\cT(\pi) > J_\cT(\pi')$ and $J_{\cT'}(\pi') > J_{\cT'}(\pi)$.
Working in the environment $(\cS, \cA, \cT, d_0, \_, \gamma)$, the original reward $\cR$ satisfies $J_\cR(\pi) = J_\cT(\pi) > J_\cT(\pi') = J_\cR(\pi')$. Define $\cR' = \cF_\cT^{\pi'} - \cF_\cT^{\pi}$. Since $J_\cT(\pi) \neq J_\cT(\pi')$, the visit counts $\cF_\cT^\pi$ and $\cF_\cT^{\pi'}$ are distinct, so $\cR' \neq 0$. Then
\[
J_{\cR'}(\pi') - J_{\cR'}(\pi) = \langle \cR', \cF_\cT^{\pi'} - \cF_\cT^{\pi} \rangle = \lVert \cF_\cT^{\pi'} - \cF_\cT^{\pi} \rVert^2 > 0.
\]
Therefore $J_\cR(\pi) > J_\cR(\pi')$ and $J_{\cR'}(\pi') > J_{\cR'}(\pi)$, so $(\cR, \cR')$ is hackable.
\end{proofEnd}

While the result is perhaps surprising, the proof is a short construction: set $\cR'$ to be the difference in visit counts $\cF_\cT^{\pi'} - \cF_\cT^{\pi}$ for some exploiting policy pair $\pi,\pi' \in \Pi$. Interestingly, the converse does not hold.

\begin{counterexample}[Hacking may not imply exploitation]
\label{ce:hacking-exploition}
Consider any MDP with $|\cA| \geq 2$ and $\gamma > 0$. Let $\cR(s,a) = \mathbbm{1}\{a = a_1\}$ and $\cR'(s,a) = \mathbbm{1}\{a = a_2\}$, and let $\pi$ always select $a_1$ and $\pi'$ always select $a_2$. Then $(\cR, \cR')$ is hackable, since $J_\cR(\pi) = 1/(1-\gamma) > 0 = J_\cR(\pi')$ and $J_{\cR'}(\pi') = 1/(1-\gamma) > 0 = J_{\cR'}(\pi)$, but no perturbation of the transition model can alter the relative ordering of $\pi$ and $\pi'$ under either reward function.
\end{counterexample}

\Cref{prop:exploitation-implies-hacking} is unassuming by itself, but it hints at something more fundamental: a unified theory of model exploitation and reward hacking may exist \textit{despite} the anatomical differences between reward functions and transition models. If we can find such a theory, we may also discover an answer to our initial question.

Both model exploitation and reward hacking are inversions in the value ordering of a policy set. This suggests that the correct place to begin developing a unified treatment of exploitation and hacking is the value function. Unfortunately, the difficulties that make transition models unwieldy relative to reward functions (\cref{sec:anatomy}) also apply to the value function, so we still cannot apply the machinery of \citet{skalse2022defining}. The key insight to breaking past this obstacle is that the value function is rational in policy space.

\begin{theoremEnd}{proposition}[Value function rationality]
\label{prop:rational}
For any MDP with finite $\cS$ and $\cA$, the value function $J(\pi)$ is rational in $\pi$. In particular, $J$ is real-analytic on $\Pi^+$.
\end{theoremEnd}
\begin{proofEnd}
    Write $J(\pi) = \sum_s d_0(s)\, V^\pi(s)$; it suffices to show each $V^\pi(s)$ is rational in $\pi$. Let $\cR^\pi(s) = \sum_a \pi(a|s)\,\cR(s,a)$, $P^\pi(s'|s) = \sum_a \pi(a|s)\,\cT(s'|s,a)$, and $A = I - \gamma P^\pi$. The Bellman equation in matrix form is $A V^\pi = \cR^\pi$. Since $P^\pi$ is a probability matrix and $\gamma \in [0, 1)$, $A$ is invertible and $V^\pi = A^{-1} \cR^\pi$. By Cramer's rule, $V^\pi$ can then be expressed as a ratio of determinants and is thus rational as required.
\end{proofEnd}

Rational functions are typically easier to reason about compared to nonlinear functions in general because they have a host of desirable properties \citep{rudin1974real}. The two properties that are useful for our purposes are analyticity and rigidity. The former provides access to an arsenal of powerful results from analysis and differential geometry, while the latter allows us to promote properties from open subsets to the supersets containing them. 

More concretely, from these two properties, we can derive the following lemmas, which relate the gradients of value functions to the existence (or lack thereof) of inversions and give a simple geometric picture: at every point in policy space, the relationship between $\nJ_1$ and $\nJ_2$ determines whether the two value functions can disagree on which policy is better.\footnote{From now on, we identify the tangent space to the policy space $\Delta(\cA)^{|\cS|}$ at any $\pi$ with $\mathbb{R}^{|\cS|(|\cA|-1)}$, although the identification is not canonical. We can thus write $\nabla_vJ(\pi)$ for $v\in \mathbb{R}^{|\cS|(|\cA|-1)}$ an infinitesimal perturbation of $\pi$.}

\begin{figure}[t]
    \vspace*{-1em}
  \centering
  \includegraphics[width=\textwidth]{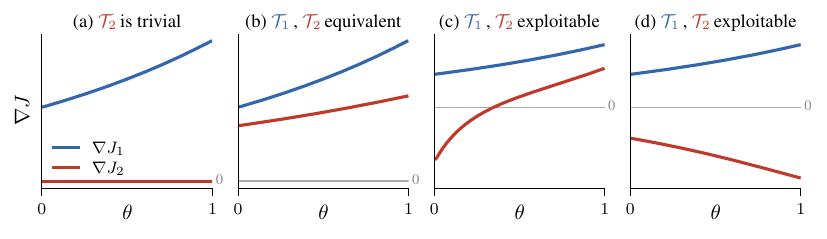}\\[-1em]
  \caption{Gradients for the value curves in \cref{fig:taxonomy}.
  \textbf{(a)}~\textcolor{grad2}{$\nabla J_2$} vanishes (trivial).
  \textbf{(b)}~Both gradients positive: proportional gradients force
  equivalence (\cref{lem:equivalence}).
  \textbf{(c, d)}~The gradients have opposite signs,
yielding exploitation (\cref{lem:exploitation}). In~(c) this
occurs only near $\theta = 0$; in~(d) it holds everywhere.}
  \label{fig:gradients}
\end{figure}

\begin{theoremEnd}{lemma}[Local inversion]
\label{lem:exploitation}
Let $\Pi$ be open. If there is a direction $v \in \mathbb{R}^{|\cS|(|\cA| - 1)}$ along which $J_1$ increases and $J_2$ decreases at some $\pi \in \Pi$ (i.e., $\nabla_vJ_2(\pi)<0<\nabla_vJ_1(\pi)$), then $J_1$ and $J_2$ admit an inversion on $\Pi$.
\end{theoremEnd}
\begin{proofEnd}
Define $\pi^\pm = \pi \pm \varepsilon v$ for 
$\varepsilon > 0$. Since each $J_i$ is analytic (\cref{prop:rational}), 
first-order Taylor expansion gives $J_i(\pi^\pm) = J_i(\pi) \pm \varepsilon \langle \nJ_i(\pi), v \rangle + O(\varepsilon^2)$ and subtracting yields
\begin{align*}
J_1(\pi^+) - J_1(\pi^-) &= 2\varepsilon \langle \nJ_1(\pi), v \rangle + O(\varepsilon^2) \\
J_2(\pi^-) - J_2(\pi^+) &= -2\varepsilon \langle \nJ_2(\pi), v \rangle + O(\varepsilon^2)
\end{align*}
Since $\langle \nJ_1(\pi), v \rangle > 0$ and $\langle \nJ_2(\pi), v \rangle < 0$, we can make $\varepsilon$ small enough so that both differences are positive and $\pi^\pm \in \Pi$ (openness). Thus, $\cT_1$ and $\cT_2$ are exploitable.
\end{proofEnd}

\begin{theoremEnd}[all end]{proposition}[Connected level sets]
    \label{prop:connected-level-sets}
    For every transition model $\cT$ and $c \in \mathbb{R}$, the level set $\{J_\cT = c\} \cap \Pi^+$ is connected.
\end{theoremEnd}
\begin{proofEnd}
    Since the map $\pi \mapsto \cF^\pi$ is a homeomorphism on $\Pi^+$ \citep[Lemma~1]{skalse2022defining}, it suffices to show connectedness in visit-count space. The set $\cF(\Pi^+)$ is the interior of a polyhedron (defined by non-negativity and Bellman flow constraints), hence convex. Since $J_\cT(f) = \langle \cR, f \rangle$ is linear, $\{\langle \cR, f \rangle = c\} \cap \cF(\Pi^+)$ is a hyperplane intersected with a convex set, which is convex and hence connected.
\end{proofEnd}

\begin{theoremEnd}{lemma}[Global equivalence]
\label{lem:equivalence}
Let $\tilde{\Pi} \subseteq \Pi$ be open. If $J_1$ and $J_2$ are non-trivial and have positively proportional gradients wherever both are nonzero on $\tilde{\Pi}$, then $J_1$ and $J_2$ are equivalent on $\Pi$.
\end{theoremEnd}
\begin{proofEnd}
    We show that positive proportionality of gradients on $\tilde{\Pi}$ forces equivalence on all of $\Pi$. The idea is to first extend  proportionality from $\tilde{\Pi}$ to all of $\Pi^+$ using analyticity and connectedness, then construct a strictly increasing reparametrization $J_2 = \varphi \circ J_1$ on $\Pi^+$, and finally extend this to $\Pi$ by density.

    We begin by extending the proportionality to $\Pi^+$. We want to establish that $\nJ_2 = \lambda \nJ_1$ with $\lambda > 0$ on all of $\Pi^+$. This requires showing that on $\Pi^+$ (i) the gradients are linearly dependent, (ii) the gradients are nonzero, and (iii) $\lambda$ is positive.

    For (i), the Gram determinant of $\nJ_1$ and $\nJ_2$ is real analytic on the domain $\Pi^+$ (\cref{prop:rational}) and vanishes on the open subset $\tilde{\Pi}$, so it vanishes identically on $\Pi^+$. The gradients are therefore linearly dependent everywhere on $\Pi^+$.

    For (ii), write $f_i\colon \pi \mapsto \cF^\pi_{\cT_i}$. The chain rule gives $\nJ_i(\pi) = Df_i(\pi)^\top \cR_i$. The map $f_i$ is a homeomorphism from $\Pi^+$ onto its image \citep[Lemma~1]{skalse2022defining}. Both $f_i$ and its inverse $\cF \mapsto \pi(a|s) = \cF(s,a)/\sum_{a'}\cF(s,a')$ are smooth (the former by rationality, the latter since all visitations are positive on $\Pi^+$), so $f_i$ is in fact a diffeomorphism onto its image (in particular, a smooth immersion) and $Df_i(\pi)$ has full column rank. Non-triviality of $J_i$ ensures $\cR$ is not orthogonal to the column space of $Df_i(\pi)$, since otherwise $J_i$ would be constant. Therefore $\nJ_i(\pi) = Df_i(\pi)^\top \cR \neq 0$ on $\Pi^+$.

    For (iii), since both gradients are nonzero, we may write $\nJ_2 = \lambda \nJ_1$ where $\lambda(\pi) = \langle \nJ_2, \nJ_1 \rangle / \|\nJ_1\|^2$ is continuous and nonzero on $\Pi^+$. Since $\Pi^+$ is connected, $\lambda$ has constant sign. Since $\lambda > 0$ on $\tilde{\Pi}$ by hypothesis, $\lambda > 0$ on $\Pi^+$.

    Having extended positive proportionality of the gradients to $\Pi^+$, we now construct a strictly increasing $\varphi$ with $J_2 = \varphi \circ J_1$ on $\Pi^+$. Such a $\varphi$ exists if and only if $J_1(\pi) = J_1(\pi')$ implies $J_2(\pi) = J_2(\pi')$ for all $\pi, \pi' \in \Pi^+$. Pick any such pair $\pi,\pi'$. Both policies lie in the same level set of $J_1$ in $\Pi^+$, and the level set is connected (\cref{prop:connected-level-sets}), so there is a path $\alpha: [0,1] \to \Pi^+$ in the level set with $\alpha(0) = \pi$ and $\alpha(1) = \pi'$. Since $J_1(\alpha(t))$ is constant, differentiating gives $\langle \nJ_1(\alpha(t)), \frac{d}{dt}\alpha(t) \rangle = 0$. Since $\nJ_2 = \lambda \nJ_1$, we also have $\langle \nJ_2(\alpha(t)), \frac{d}{dt}\alpha(t) \rangle = 0$, so $J_2(\alpha(t))$ is constant on $[0,1]$, giving $J_2(\pi) = J_2(\alpha(0)) = J_2(\alpha(1)) = J_2(\pi')$.
    
    It remains to show that such a $\varphi$ is strictly increasing on the image of $J_1$. Fix $c \in J_1(\Pi^+)$ and pick $\pi_0$ with $J_1(\pi_0) = c$. Since $\nJ_1(\pi_0) \neq 0$, we can choose a direction $v$ with $\langle \nJ_1(\pi_0), v \rangle \neq 0$ and define $h(x) = J_1(\pi_0 + xv)$. Then $h(0) = c$ and $\frac{d}{dx}h(0) \neq 0$, so by the inverse function theorem $h$ is locally invertible. Setting $\beta(t) = \pi_0 + h^{-1}(t)v$ gives a smooth curve with $J_1(\beta(t)) = h(h^{-1}(t)) = t$ near $c$. Then $\varphi(t) = J_2(\beta(t))$, so
    \[
        \frac{d}{dt}\varphi(t) = \langle \nJ_2(\beta(t)), \tfrac{d}{dt}\beta(t) \rangle = \lambda(\beta(t))\langle \nJ_1(\beta(t)), \tfrac{d}{dt}\beta(t) \rangle = \lambda(\beta(t)),
    \]
    where the last equality uses $\frac{d}{dt}J_1(\beta(t)) = 1$. Since $\lambda > 0$, $\varphi$ is locally strictly increasing at every point of $J_1(\Pi^+)$, hence strictly increasing on $J_1(\Pi^+)$.

    Finally, we use the density of $\Pi^+$ in the space of all stationary policies to extend to $\Pi$. Define $\bar{\varphi}(u) = \lim_{n \to \infty} \varphi(u_n)$ for any sequence $(u_n) \subseteq J_1(\Pi^+)$ with $u_n \to u$; this is well-defined because $\varphi$ is bounded and monotone. By continuity of $J_2$, we have $\bar{\varphi}(J_1(\pi)) = \lim_{n \to \infty} \varphi(J_1(\pi_n)) = \lim_{n \to \infty} J_2(\pi_n) = J_2(\pi)$ for any $\pi \in \Pi$ and any sequence $(\pi_n) \subseteq \Pi^+$ with $\pi_n \to \pi$. The extension $\bar{\varphi}$ is strictly increasing: given $a < d$ in $J_1(\Pi)$, density of $J_1(\Pi^+)$ provides $b, c \in J_1(\Pi^+)$ with $a < b < c < d$, giving $\bar{\varphi}(a) \leq \varphi(b) < \varphi(c) \leq \bar{\varphi}(d)$. So for any $\pi, \pi' \in \Pi$, $J_1(\pi) > J_1(\pi')$ implies $J_2(\pi) = \bar{\varphi}(J_1(\pi)) > \bar{\varphi}(J_1(\pi')) = J_2(\pi')$, and $J_1$ and $J_2$ induce the same ordering on $\Pi$, contradicting non-equivalence.
\end{proofEnd}

The intuition for \cref{lem:exploitation} is as follows. If there is a direction in policy space that improves performance under one model but decreases it under the other, then a step in that direction leads to two policies that form an exploiting pair.

\Cref{lem:equivalence} has a similar intuition. If every direction that improves one value function also improves the other, then the two can never disagree on which of any two policies is better, so they must induce the same ordering. While the result is simple, the formal proof is surprisingly involved and relies heavily on value function rationality.

Together, these two lemmas cover every possible relationship between $\nJ_1$ and $\nJ_2$ (\cref{fig:gradients}). At every point in policy space, the gradients are either linearly independent, antiparallel, or positively proportional. The first two cases produce an inversion via \cref{lem:exploitation}; the third forces equivalence via \cref{lem:equivalence}. This gives our main characterization.

\begin{theoremEnd}{theorem}[Value inversions]
\label{thm:inversion}
Let $J_1$ and $J_2$ be non-trivial, non-equivalent value functions. If the policy set contains an open subset, then it admits a value inversion for $J_1$ and $J_2$.
\end{theoremEnd}
\begin{proofEnd}
    Let $\tilde{\Pi} \subseteq \Pi$ be open, and let $\cT_1$, $\cT_2$ be non-trivial and non-equivalent on $\Pi$. Since $J_1$ and $J_2$ are rational (\cref{prop:rational}), their gradients $\nJ_1$ and $\nJ_2$ exist. If these gradients are linearly independent at some $\pi_0 \in \tilde{\Pi}$, we can solve $\langle \nJ_1(\pi_0), v \rangle = 1$ and $\langle \nJ_2(\pi_0), v \rangle = -1$ (a consistent linear system in $|\cS|(|\cA|-1) \geq 2$ unknowns), and \cref{lem:exploitation} gives exploitation. If instead they are everywhere linearly dependent on $\tilde{\Pi}$, we may write $\nJ_2 = \lambda \nJ_1$ for a scalar-valued function $\lambda$ wherever both gradients are nonzero (such points exist since each $\nJ_i$ is analytic and not identically zero by non-triviality, so it is nonzero on a subset open and dense in $\tilde{\Pi}$). If $\lambda(\pi_0) < 0$ at some such point, setting $v = \nJ_1(\pi_0)$ gives directional derivatives of opposite sign, and \cref{lem:exploitation} again gives exploitation. The remaining possibility is that $\lambda > 0$ wherever both gradients are nonzero on $\tilde{\Pi}$, but then \cref{lem:equivalence} forces $\cT_1$ and $\cT_2$ to be equivalent on $\Pi$, contradicting our assumption.
\end{proofEnd}

The theorem immediately yields the desired characterization for model exploitation.

\begin{corollary}[Imperfect world models are exploitable]
\label{cor:exploitability}
On any policy set containing an open subset, every non-trivial, non-equivalent pair of transition models is exploitable.
\end{corollary}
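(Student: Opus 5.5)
The corollary follows from \cref{thm:inversion} once we set $J_1 = J_{\cT}$ and $J_2 = J_{\cT'}$, the value functions of the fixed task $(\cS, \cA, \_, d_0, \cR, \gamma)$ under the two transition models. The plan is to check that these are value functions in the sense \cref{thm:inversion} requires, translate its hypotheses, and read its conclusion back as \cref{def:exploitability}.

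First, the hypotheses. By the convention in \cref{sec:prelims}, calling $\cT$ and $\cT'$ non-trivial and non-equivalent on $\Pi$ means exactly that $J_\cT$ and $J_{\cT'}$ are non-trivial and non-equivalent on $\Pi$. The policy set $\Pi$ contains an open subset $\tilde{\Pi} \subseteq \Pi^+$ by assumption. Each $J_{\cT}$ is an MDP value function over finite $\cS$ and $\cA$, so \cref{prop:rational} makes it rational and real-analytic on $\Pi^+$. This regularity is all the ingredients of \cref{thm:inversion} use: the Taylor argument in \cref{lem:exploitation}, and the Gram-determinant, nonvanishing-gradient and connected-level-set arguments in \cref{lem:equivalence} and \cref{prop:connected-level-sets}.

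Second, the conclusion. \cref{thm:inversion} gives $\pi, \pi' \in \Pi$ with $J_\cT(\pi) > J_\cT(\pi')$ and $J_{\cT'}(\pi') > J_{\cT'}(\pi)$. This is verbatim the condition in \cref{def:exploitability}, so $(\cT, \cT')$ is exploitable relative to $\Pi$ and the task.

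The only step needing care is confirming that \cref{thm:inversion} was proved for an arbitrary transition model, not just a reward-indexed family. I would check this in step (ii) of \cref{lem:equivalence}, which shows $\nJ_i \neq 0$ via the homeomorphism $\pi \mapsto \cF^\pi_{\cT_i}$. That argument holds separately for each $\cT_i$ with the common reward $\cR$, because Lemma~1 of \citet{skalse2022defining} applies to any transition model. Once this is confirmed, the corollary is immediate.
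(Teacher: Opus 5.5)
Your proposal is correct and takes the same route as the paper, which obtains the corollary immediately by instantiating \cref{thm:inversion} with $J_1 = J_{\cT}$ and $J_2 = J_{\cT'}$ and reading the resulting inversion as \cref{def:exploitability}. One correction to your second paragraph: analyticity is not all that \cref{thm:inversion} uses. The nonvanishing-gradient step of \cref{lem:equivalence} and \cref{prop:connected-level-sets} also rely on $J_{\cT}$ being linear in $\cF^\pi_{\cT}$ over a convex occupancy set, with $\pi \mapsto \cF^\pi_{\cT}$ a diffeomorphism on $\Pi^+$. That is exactly the per-transition-model check you flag in your last paragraph, and it holds for each $\cT$ under the standing reachability assumption.
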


In practice, one might hope that \cref{cor:exploitability} only applies to high-entropy or poorly performing policies that no reasonable learning algorithm would ever produce. Unfortunately, \cref{thm:inversion} still applies both to the set of $\varepsilon$-suboptimal policies (those $\pi$ with $J(\pi) \geq \sup_{\pi'} J(\pi') - \varepsilon$ for $\varepsilon > 0$) and to the set of $\delta$-deterministic policies (those $\pi$ with $\max_a \pi(a\mid s) \geq \delta$ for $\delta < 1$ and every $s \in \cS$).

\begin{corollary}[Common exploitable policy sets]
\label{cor:subsets}
Every non-trivial, non-equivalent pair of transition models is exploitable on (i) the set of all stationary policies, (ii) the set of all $\varepsilon$-suboptimal policies ($\varepsilon > 0$), and (iii) the set of all $\delta$-deterministic policies ($\delta < 1$).
\end{corollary}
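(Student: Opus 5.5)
The plan is to reduce all three cases to \cref{cor:exploitability}: for each policy set, exhibit an open subset of $\Pi^+$ contained in it, and the corollary (via \cref{thm:inversion}) gives exploitation for every non-trivial, non-equivalent pair. Non-triviality and non-equivalence are understood relative to the set in question. Because \cref{lem:equivalence} already propagates equivalence from an open subset to the whole set, nothing beyond openness is needed.

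For (i), the set of all stationary policies $\Delta(\cA)^{|\cS|}$ contains $\Pi^+$ itself, which is open. For (iii), consider the set of $\delta$-deterministic policies with $\delta<1$. I will fix a deterministic policy $\pi_d$ with $\pi_d(a_s\mid s)=1$, and take the set $U$ of $\pi\in\Pi^+$ with $\pi(a_s\mid s)>\delta$ for all $s$. This set is open, since it is a finite intersection of strict inequalities on continuous coordinates. It is nonempty: mix $\pi_d$ with the uniform policy at weight $\eta$ small enough that $1-\eta+\eta/|\cA|>\delta$. Every $\pi\in U$ satisfies $\max_a\pi(a\mid s)\ge\pi(a_s\mid s)>\delta$, so $U$ lies in the $\delta$-deterministic set.

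For (ii), the $\varepsilon$-suboptimality is taken with respect to one of the two value functions, say $J=J_{\cT}$. Let $J^*=\sup_{\pi'}J(\pi')$, which is attained by some deterministic optimal $\pi^*$. By continuity of $J$ on the compact set of stationary policies (\cref{prop:rational} gives continuity, since the denominator $\det(I-\gamma P^\pi)$ never vanishes), the preimage $J^{-1}((J^*-\varepsilon,\infty))$ is relatively open and contains $\pi^*$. Intersecting it with $\Pi^+$ gives an open set, and this set is nonempty because $\Pi^+$ is dense and $\pi^*$ is a limit of interior policies, for instance $(1-\eta)\pi^*+\eta\,\mathrm{unif}$. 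The resulting open set is contained in the $\varepsilon$-suboptimal set.

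The main subtlety I expect is in (ii): ensuring the open subset is genuinely inside $\Pi^+$, where the lemmas are stated, rather than only relatively open near the boundary point $\pi^*$. The density and continuity argument above handles this. A second point to verify is that the hypotheses of \cref{thm:inversion} refer to the same $\Pi$ as the conclusion, so that the non-triviality and non-equivalence assumed on each set are exactly what is used.
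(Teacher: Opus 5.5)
Your proposal is correct and takes the same route the paper intends. The paper gives no separate proof: it only remarks after \cref{cor:exploitability} that \cref{thm:inversion} still applies because each of these sets contains an open subset of $\Pi^+$. Your explicit open subsets for (ii) and (iii), including the density argument that puts the (ii) neighborhood inside $\Pi^+$, supply exactly the verification the paper leaves implicit.
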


\Cref{thm:inversion} implies characterizations of both reward hacking and model exploitation. \citet{skalse2022defining} arrive at the reward hacking result through an argument in the visit-count space, exploiting the linearity of $J$ in the reward vector. The same conclusion follows \textit{a fortiori} from \cref{thm:inversion}, since the only property of $J_1$ and $J_2$ used in the proof is rationality in $\pi$, which holds regardless of whether the index ranges over reward functions or transition models.

\begin{corollary}[{{\hyperref[thm:skalse-thm1]{Theorem 1}} of \citet{skalse2022defining}}]
\label{cor:skalse}
On any policy set containing an open subset, every non-trivial, non-equivalent pair of reward functions is hackable.
\end{corollary}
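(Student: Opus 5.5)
This corollary should follow from \cref{thm:inversion} by specializing it to a fixed environment $(\cS, \cA, \cT, d_0, \_, \gamma)$ and letting the index of $J_i$ range over reward functions. Given reward functions $\cR_1, \cR_2$ that are non-trivial and non-equivalent on a policy set $\Pi$ containing an open subset $\tilde{\Pi}$, I set $J_i(\pi) = J_{\cR_i}(\pi) = \langle \cR_i, \cF^\pi_\cT \rangle$. Hackability in the sense of \cref{def:hackability} is exactly a value inversion between $J_1$ and $J_2$ on $\Pi$. So it suffices to check that these $J_i$ satisfy the hypotheses under which \cref{thm:inversion} was proved, and then apply it.

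The hypotheses are these:
\begin{enumerate}
\item \textbf{Rationality.} Each $J_i$ is rational in $\pi$, hence real-analytic on $\Pi^+$. This is \cref{prop:rational}, applied with reward $\cR_i$ and the common transition model $\cT$; its proof never used which component of the MDP varies.
\item \textbf{Nonvanishing gradients.} The step in \cref{lem:equivalence} asserting $\nJ_i \neq 0$ on $\Pi^+$ goes through verbatim. Here $\nJ_i = Df(\pi)^\top \cR_i$ with a single map $f \colon \pi \mapsto \cF^\pi_\cT$, which is a diffeomorphism onto its image. So non-triviality of $\cR_i$ again forces a nonzero gradient.
\item \textbf{Connected level sets.} \cref{prop:connected-level-sets} is already stated for an arbitrary linear functional $\langle \cR, \cdot\rangle$ on the convex set $\cF(\Pi^+)$, so it holds for each $J_{\cR_i}$.
\end{enumerate}

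With these checked, I run the case split of \cref{thm:inversion} on $\tilde{\Pi}$:
\begin{enumerate}
\item If the gradients are linearly independent somewhere on $\tilde{\Pi}$, \cref{lem:exploitation} gives an inversion.
\item If they are antiparallel at some point where both are nonzero, \cref{lem:exploitation} again gives an inversion.
\item Otherwise they are positively proportional wherever both are nonzero. Then \cref{lem:equivalence} forces $J_{\cR_1}$ and $J_{\cR_2}$ to be equivalent on $\Pi$, which contradicts the assumption that the pair is non-equivalent.
\end{enumerate}
In either of the first two cases we obtain $\pi, \pi' \in \Pi$ with $J_{\cR_1}(\pi) > J_{\cR_1}(\pi')$ and $J_{\cR_2}(\pi') > J_{\cR_2}(\pi)$, i.e.\ $(\cR_1, \cR_2)$ is hackable.

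The only real obstacle is bookkeeping. Several statements in the proofs of \cref{lem:equivalence} and \cref{thm:inversion} are phrased in terms of $\cT_1, \cT_2$ and $f_i$, so I will point out explicitly that every step used only rationality of $J_i$ in $\pi$, the diffeomorphism $\pi \mapsto \cF^\pi$, and connectedness of level sets. All three hold, in fact more simply, when $\cT$ is fixed and the reward varies.

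One further check concerns the definitions. The notions of "non-trivial" and "non-equivalent" in \citet{skalse2022defining} are relative to $\Pi$, and these coincide with the paper's definitions of triviality and equivalence for value functions. So the hypotheses of the corollary translate directly into those of \cref{thm:inversion}.
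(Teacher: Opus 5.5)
Your proposal is correct and follows the paper's route: the paper also obtains this corollary \emph{a fortiori} from \cref{thm:inversion}, letting the index of $J_i$ range over reward functions in a fixed environment. Your bookkeeping is more careful than the paper's remark that only rationality in $\pi$ is used. You correctly note that the proof of \cref{lem:equivalence} also relies on the diffeomorphism $\pi \mapsto \cF^\pi$ (for nonvanishing gradients) and on \cref{prop:connected-level-sets}, and both hold verbatim for a fixed $\cT$ with varying $\cR$.
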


\subsection{Relaxation}

\begin{figure}[t]
  \centering
  \includegraphics{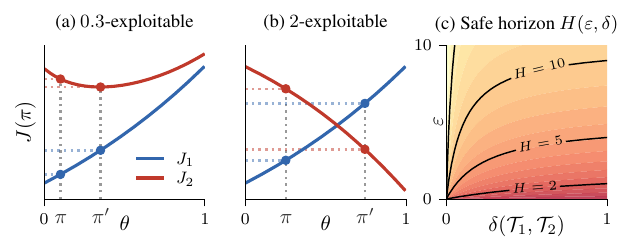}\\[-1em]
  \caption{Examples of $\varepsilon$-exploitability and a contour plot for the safe horizon.
  \textbf{(a, b)}~The exploitable transition model pairs from
  \cref{fig:taxonomy}(c, d), annotated with exploiting policies.
  The pair in~(a) is $0.3$-exploitable and the pair in~(b) is
  $2$-exploitable.
  \textbf{(c)}~The safe horizon $H(\varepsilon, \delta)$ from
  \cref{thm:safe-horizon}. Each contour marks the largest effective horizon
  $1/(1-\gamma)$ under which $\varepsilon$-unexploitability is
  guaranteed for a given tolerance $\varepsilon$ and maximum total variation
  distance $\delta(\cT_1, \cT_2)$.}
  \label{fig:safe-horizon}
\end{figure}

We now introduce a relaxed notion of exploitability to pursue two goals. \textit{First}, practitioners may not care about small-value inversions in real-world applications, so we seek a more graded measure of exploitability that distinguishes between negligible and catastrophic differences in value. \textit{Second}, we want to answer our initial question about when exploitation is avoidable, even if only approximately. Thus far, we have seen that interesting unexploitability is impossible on large policy sets (\cref{cor:exploitability}) and resists characterization on finite ones (\cref{sec:finite}). A weaker definition, in contrast, may be more likely to admit positive existence conditions than our usual notion of exploitation.

\begin{definition}[$\varepsilon$-Exploitation]
\label{def:eps-exploitability}
Transition models $\cT$ and $\cT'$ are \textbf{$\varepsilon$-exploitable} relative to a policy set $\Pi$ and a task $(\cS, \cA, \_, d_0, \cR, \gamma)$ if there exist $\pi, \pi' \in \Pi$ such that
\begin{equation*}
    J_\cT(\pi) - J_\cT(\pi') > \varepsilon \quad \text{and} \quad J_{\cT'}(\pi') - J_{\cT'}(\pi) > \varepsilon,
\end{equation*}
otherwise they are \textbf{$\varepsilon$-unexploitable}.
\end{definition}

Intuitively, $\varepsilon$-unexploitability says that while small inversions may exist, $\cT$ and $\cT'$ may never confidently disagree about how to rank a policy pair. This definition immediately satisfies our first goal, because it generalizes exploitation (recovering \cref{def:exploitability} when $\varepsilon = 0$) and is a continuous rather than binary measure of ``exploitability'' (i.e., via the minimum $\varepsilon$ for which a pair of transition models remains $\varepsilon$-unexploitable).

To address our second goal and establish sufficient conditions for $\varepsilon$-unexploitability, we derive the following theorem.

\begin{theoremEnd}{theorem}[Safe horizon]
\label{thm:safe-horizon}
Let $\cR \colon \cS \times \cA \to [0,1]$ and let $\delta = \frac{1}{2} \max_{s,a} \| \cT(\cdot|s,a) - \cT'(\cdot|s,a) \|_1$ be the total variation distance between distinct $\cT$ and $\cT'$. For any $\varepsilon > 0$, define the safe horizon
\begin{equation*}
    H(\varepsilon, \delta) = \frac{(1+\varepsilon) + \sqrt{(1-\varepsilon)^2 + 4\varepsilon/\delta}}{2}.
\end{equation*}
Every pair $(\cT, \cT')$ is $\varepsilon$-unexploitable on every policy set $\Pi$ whenever $1/(1-\gamma) \leq H(\varepsilon,\delta)$. Furthermore, this bound is tight.
\end{theoremEnd}
\begin{proofEnd}
Write $h = 1/(1-\gamma)$. The tight simulation lemma \citep{lobeloptimal} with $\varepsilon_\cR = 0$ gives $|J_1(\pi) - J_2(\pi)| \leq B$ for every $\pi$, where $B=h-\frac{h}{1-(1-h)\delta}$.

Suppose $(\cT_1, \cT_2)$ is $\varepsilon$-exploitable via $\pi, \pi'$. Add the two defining inequalities:
\begin{equation*}
    \underbrace{(J_1(\pi) - J_2(\pi))}_{\leq\, B} + \underbrace{(J_2(\pi') - J_1(\pi'))}_{\leq\, B} \;\;>\;\; 2\varepsilon,
\end{equation*}
so $B > \varepsilon$. Contrapositively, $B \leq \varepsilon$ implies $\varepsilon$-unexploitability.
 
It is elementary to show that for $\delta,\varepsilon>0$ there is a unique $h = H(\varepsilon,\delta)>1$ at which $B = \varepsilon$, and $h < H(\varepsilon,\delta)$ implies $B < \varepsilon$. Solving $B = \varepsilon$ reduces to a quadratic whose positive root is the $H(\varepsilon, \delta)$ in the theorem statement.
 
For tightness, take $\cS = \{0, 1\}$, $\cA = \{0, 1\}$, $d_0(0) = 1$, $\cR(s,a) = \mathbbm{1}\{s = 0\}$ and let state $1$ be absorbing. Under $\cT_1$, action $0$ self-loops at state $0$ and action $1$ transitions to state $1$ with probability $\delta$. Under $\cT_2$, the actions swap, so action $1$ self-loops at state $0$ and action $0$ transitions to state $1$ with probability $\delta$. Then $J_1(\pi_0) = J_2(\pi_1) = h$ and $J_1(\pi_1) = J_2(\pi_0) = h/(1+(h{-}1)\delta)$, so both gaps equal $B$ exactly, and the pair is $\varepsilon$-exploitable whenever $h > H$.
\end{proofEnd}


The safe horizon $H(\varepsilon, \delta)$ is the longest effective horizon $1/(1-\gamma)$ at which $\varepsilon$-unexploitability is guaranteed. Its dependence on both arguments aligns with standard intuitions in model-based RL. Increasing $\delta$ lowers $H$: a less accurate model exhausts its error budget sooner, so safe planning requires a shorter horizon. This is exactly in line with the main result by \cite{jiang2015dependence}, who show that a shorter effective horizon should be used when data is scarce. In this sense, we corroborate and expand their findings through the lens of $\varepsilon$-exploitability.

Increasing $\varepsilon$ raises $H$: a designer who tolerates larger inversions can plan further ahead. At the extremes, $H \to \infty$ as $\delta \to 0$ (a perfect model is never exploitable) and $H \to 1 + \varepsilon$ as $\delta \to 1$ (a maximally inaccurate model is safe only near single-step planning). This formalizes the widely held intuition that model errors compound over the planning horizon \citep{ross2011reduction, talvitie2014model}, and provides a closed-form expression for exactly how much horizon a given level of model error can afford (\cref{fig:safe-horizon}c).

The closed form of $H$ is exact but unwieldy. A looser but more memorable sufficient condition follows from dropping lower-order terms.

\begin{theoremEnd}{corollary}[Square-root heuristic]
\label{cor:sqrt-rule}
Let $\cR \colon \cS \times \cA \to [0,1]$ and let $\delta = \frac{1}{2} \max_{s,a} \| \cT(\cdot|s,a) - \cT'(\cdot|s,a) \|_1 > 0$. For any $\varepsilon > 0$, the pair $(\cT, \cT')$ is $\varepsilon$-unexploitable whenever $1 / (1 - \gamma) < \sqrt{\varepsilon / \delta}$.
\end{theoremEnd}
\begin{proofEnd}
It suffices to show $\sqrt{\varepsilon/\delta} \leq H(\varepsilon, \delta)$, since then $1/(1-\gamma) < \sqrt{\varepsilon/\delta}$ implies $1/(1-\gamma) < H(\varepsilon, \delta)$ and \cref{thm:safe-horizon} gives $\varepsilon$-unexploitability. Since $(1-\varepsilon)^2 \geq 0$,
\[
    \sqrt{(1-\varepsilon)^2 + \frac{4\varepsilon}{\delta}} \;\geq\; 2\sqrt{\frac{\varepsilon}{\delta}},
\]
so $H(\varepsilon, \delta) \geq \frac{1+\varepsilon}{2} + \sqrt{\varepsilon/\delta} > \sqrt{\varepsilon/\delta}$.
\end{proofEnd}

\Cref{cor:sqrt-rule} gives a quick sense of the magnitude of possible inversions: at $\gamma = 0.9$ the effective horizon is $10$, so the worst-case inversion that cannot be ruled out scales as $100\delta$; at $\gamma = 0.99$ it scales as $10{,}000\delta$. Unlike the bound in \Cref{thm:safe-horizon}, the approximate bound in \Cref{cor:sqrt-rule} is not tight, so the actual inversion may be much smaller, but it provides an immediate sense of how model error and planning horizon interact without requiring further analysis. For the transition model pairs in \cref{fig:taxonomy}, we compare both bounds against the actual exploitation gaps  in \cref{app:bound-comparison}.

\section{Related work}

\paragraph{History and etymology of model exploitation.} The term ``model exploitation'' has no single origin in RL. Adjacent concepts, such as ``model bias'' and ``compounding error,'' have appeared in the literature since at least the early 2010s \citep{deisenroth2011pilco, ross2011reduction}, themselves inherited from control theory \citep{schneider1996exploiting, kappen2005path}. As deep RL grew in popularity toward the end of that decade, practitioners turned to sample-efficient model-based RL (MBRL) to meet rising data demands. Learned models, however, are uncertain where data is scarce, and unlike in model-free methods, no trial-and-error mechanism exists to correct the resulting errors. Researchers began describing policies as ``exploiting'' these uncertain regions \citep{kurutach2018model}, though this usage of ``exploit'' also predates deep RL (cf.\ \citealt{schneider1996exploiting}). The set phrase ``model exploitation'' congealed only more recently, following \citet{ha2018worldmodels} and \citet{janner2019trust}, and has since motivated a lineage of conservative offline MBRL methods \citep{yu2020mopo, kidambi2020morel}.

\paragraph{Other adjacent phenomena.} Model exploitation overlaps with several independent concepts. Planner overfitting \citep{arumugam2018mitigating,jiang2015dependence} refers to a planner discovering spurious shortcuts in a learned model and develops regularization-based remedies. \cite{jiang2015dependence} were the first to formalize and isolate this phenomenon by highlighting the role that the effective horizon plays in regularizing a learned transition model when data is scarce. The sim-to-real gap \citep{jakobi1995noise, tobin2017domain} describes the same failure when transferring controllers from simulated to physical environments, typically addressed through domain randomization or system identification. Objective robustness \citep{koch2021objective} studies agents that achieve high reward in training environments but pursue unintended objectives at deployment, which can be viewed as exploitation of a training-time model of the deployment distribution. Lastly, as discussed, the value equivalence principle \citep{grimm2020value,grimm2021proper} provides a framing of when two models can be well-understood as equivalent in terms of \textit{value preservation} under the Bellman operator. This principle is closely tied to our notion of equivalent transition models (\Cref{fig:taxonomy}).

\paragraph{MDP extensions.} Several lines of work extend the MDP formalism to defend against imperfect transition models, typically by modifying the optimization problem rather than studying the misspecification itself. Robust MDPs \citep{iyengar2005robust, nilim2005robust} represent transition uncertainty through uncertainty sets and optimize worst-case value, a cardinal safety criterion rather than the ordinal one we propose. MOReL \citep{kidambi2020morel} constructs a pessimistic MDP by introducing absorbing states at high-uncertainty transitions, and provides strong suboptimality guarantees for the resulting policy, but does not formalize the notion of exploitation it defends against. Both modify the MDP to cope with model uncertainty; our work instead characterizes when that uncertainty inverts the policy ordering.

\paragraph{World models.} A growing family of methods trains policies entirely inside learned dynamics models, from the Dreamer line \citep{hafner2019dream, dreamer2, dreamer3, dreamer4} to MuZero \citep{schrittwieser2020mastering} to recent JEPA-style predictive representations \citep{balestriero2025lejepa}. These methods measure model quality through predictive accuracy or downstream task performance, both of which are cardinal. Our results show that low prediction error does not preclude ordinal failures: a model can be accurate yet still invert the policy ordering induced by the true dynamics (\cref{thm:safe-horizon}).

\paragraph{Outside machine learning.} Optimizing against an imperfect model is a recurring failure mode across disciplines, so there is an extensive literature that exists outside machine learning. For example, in macroeconomics, the Lucas critique \citep{lucas1976econometric} warns that policy rules derived from an estimated model can fail when agents adapt to the policy itself, and robust control \citep{hansen2008robustness} responds by optimizing against worst-case model perturbations. In sim-to-real robotics, controllers optimized in simulation routinely exploit discrepancies with real-world physics \citep{jakobi1995noise, tobin2017domain}. Our formalization of exploitation as a value inversion applies in principle to any setting where a proxy model guides optimization over a structured decision space.

\section{Conclusion}
\label{sec:conclusion}

There is increasing interest in approaches to decision-making that combine a forward predictive model with planning. Although there is substantial literature on the design and analysis of such model-based algorithms, the emergence of world models that make predictions directly from rich low-level sensor streams pushes the boundaries of these methods. This calls for a fresh look at the questions of model robustness and safety of the computed policies. In this paper, we formalize the concept of model exploitation and study the conditions under which policies trained on empirical models may be safe. We find that in sufficiently rich policy classes, exploitability is inevitable. However, under a relaxed definition, we can establish conditions for unexploitability. We suggest that this could guide the development of new approaches to safe policy synthesis, as well as parallel methods for quantifying the quality of synthesized policies with respect to such safety objectives.

\paragraph{Limitations and future work.}

We identify three main limitations of our work. \textit{First}, our results, like those in  \citet{skalse2022defining}, concern MDPs with finite state and action spaces. We defer extensions to continuous and partially observable MDPs (POMDPs) to future work, noting that solving the former may yield the latter (since every POMDP can be written as a continuous belief-state MDP \citep{aastrom1965optimal}). \textit{Second}, also like in \citet{skalse2022defining}, our notion of exploitation is binary and restrictive. In practice, it is unlikely that small, rare value inversions will pose meaningful safety concerns. Future work could refine our notion of exploitation by introducing a more granular measure of exploitability (e.g., deriving bounds using correlation metrics \citep{laidlaw2024correlated} or expanding upon $\varepsilon$-exploitation). \textit{Third}, there are, as of yet, no known closed-form necessary and sufficient conditions that guarantee unexploitability in finite policy sets. Although we question whether such simple conditions exist (\cref{conj:finite}), there is still area to explore. Researchers interested in pursuing this direction may, for instance, find it productive to begin work on special finite policy sets and specific MDPs (e.g., characterize exploitation on deterministic policies in linear MDPs).

\bibliographystyle{plainnat}
\bibliography{references}


\appendix
\crefalias{section}{appendix}

\newpage

\section{Notation}
\label{app:notation_table}

We first provide a table summarizing all relevant notation.

\captionsetup[table]{skip=10pt}
\begin{table}[ht!]
\def\arraystretch{1.3}
    \centering
    \begin{footnotesize}
    \begin{tabular}{@{}lll@{}}
        \toprule
        %
        \textit{Notation}&\textit{Meaning}&\textit{Definition} \\
        \midrule
        %
        $\cS$& State space& \\
        $\cA$& Action space& \\
        $\cT$& Transition model& $\cT : \cS \times \cA \to \Delta(\cS)$ \\
        $d_0$& Initial state distribution& $d_0 \in \Delta(\cS)$ \\
        $\cR$& Reward function& $\cR : \cS \times \cA \to \mathbb{R}$ \\
        $\gamma$& Discount factor& $\gamma \in [0,1)$ \vspace{4mm}\\
        %
        $\cM$& Markov decision process& $(\cS, \cA, \cT, d_0, \cR, \gamma)$ \\
        $\cM \setminus \cR$& Environment (MDP without reward)& $(\cS, \cA, \cT, d_0, \_, \gamma)$ \\
        $\cM \setminus \cT$& Task (MDP without transitions)& $(\cS, \cA, \_, d_0, \cR, \gamma)$ \vspace{4mm}\\
        %
        $\pi$& Stationary policy& $\pi : \cS \to \Delta(\cA)$ \\
        $\Pi$& Policy set& \\
        $\Pi^+$& Interior of all stationary policies& $\{\pi \in \Delta(\cA)^{|\cS|}: \pi(a|s) > 0\ \forall_{s \in \cS, a \in \cA}\}$\\
        $\hat{\Pi}$& Finite policy set& \vspace{4mm}\\
        %
        $\cF^\pi_\cT(s,a)$& Discounted visit counts& $\sum_{t=0}^\infty \gamma^t \mathbbm{1}(s_t {=} s, a_t {=} a)$ \\
        $J(\pi)$& Policy value& $\langle \cR, \cF^\pi_\cT \rangle$ \\
        $J_\cT(\pi)$& Value under transition model $\cT$& $\langle \cR, \cF^\pi_\cT \rangle$ \\
        $J_\cR(\pi)$& Value under reward function $\cR$& $\langle \cR, \cF^\pi \rangle$ \\
        $J_i(\pi)$& Value under $\cR_i$ or $\cT_i$& (context determines index) \\
        $V^\pi(s)$& State-value function& $\E_{\tau \sim \pi}[G(\tau) \mid s_0 = s]$ \\
        $r^\pi(s)$& Per-state expected reward& $\sum_a \pi(a|s)\, \cR(s, a)$ \vspace{4mm}\\
        %
        $(\cR, \cR')$ hackable& Reward hacking& $\exists_{\pi, \pi' \in \Pi}\ J_\cR(\pi) > J_\cR(\pi')$ and $J_{\cR'}(\pi') > J_{\cR'}(\pi)$ \\
        $(\cT, \cT')$ exploitable& Model exploitation& $\exists_{\pi, \pi' \in \Pi}\ J_\cT(\pi) > J_\cT(\pi')$ and $J_{\cT'}(\pi') > J_{\cT'}(\pi)$ \vspace{4mm}\\
        %
        Equivalent& Same policy ordering& $\forall_{\pi, \pi' \in \Pi}\ J_1(\pi) \geq J_1(\pi') \Leftrightarrow J_2(\pi) \geq J_2(\pi')$ \\
        Trivial& Constant value& $\forall_{\pi, \pi' \in \Pi}\ J(\pi) = J(\pi')$ \\
        \bottomrule
    \end{tabular}
    \end{footnotesize}
    \caption{\vspace{4mm} A summary of notation.}
    \label{tab:notation}
\end{table}

\section{Proofs}
\label{app:proofs}

\printProofs

\section{Relevant reward hacking theorems}
\label{app:skalse-theorems}

For convenience, we restate the two main characterization results from \citet{skalse2022defining} in our notation. Note that \cref{cor:skalse} and \cref{thm:skalse-thm1} are contrapositives of the same fact: \cref{thm:skalse-thm1} says that any unhackable, non-trivial pair must be equivalent, while \cref{cor:skalse} says that any non-trivial, non-equivalent pair must be hackable. We use the latter phrasing to parallel our exploitability results.

\begin{skalsetheorem}[Theorem 1 of \citet{skalse2022defining}]
\label{thm:skalse-thm1}
In any environment $(\cS, \cA, \cT, d_0, \_, \gamma)$, if $\hat{\Pi}$ contains an open set, then any pair of reward functions that are unhackable and non-trivial on $\hat{\Pi}$ are equivalent on $\hat{\Pi}$.
\end{skalsetheorem}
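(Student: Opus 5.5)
The statement is Theorem~1 of \citet{skalse2022defining}, which in our notation reads: on $\hat\Pi$ containing an open set, any non-trivial, unhackable reward pair is equivalent. The plan is to derive it as a special case of \cref{thm:inversion}, via the contrapositive phrasing in \cref{cor:skalse}.

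Fix the environment $(\cS,\cA,\cT,d_0,\_,\gamma)$ and let $J_i(\pi)=\langle \cR_i,\cF^\pi\rangle$ for $i=1,2$. Suppose $\cR_1,\cR_2$ are non-trivial and unhackable on $\hat\Pi$, and suppose for contradiction that they are not equivalent on $\hat\Pi$.

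The first step is to confirm that $J_1$ and $J_2$ satisfy the hypotheses used in \cref{thm:inversion}. \Cref{prop:rational} was stated for an arbitrary MDP, so it applies to each MDP $(\cS,\cA,\cT,d_0,\cR_i,\gamma)$. Hence each $J_i$ is rational in $\pi$ and real-analytic on $\Pi^+$.

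The main point to check is that the proofs of \cref{lem:exploitation}, \cref{lem:equivalence} and \cref{thm:inversion} use only the following ingredients:
\begin{itemize}
\item analyticity of $J_i$ on $\Pi^+$;
\item the fact that $\pi\mapsto\cF^\pi$ is a diffeomorphism onto its image, which gives nonvanishing gradients for non-trivial $J_i$;
\item connectedness of the level sets $\{J_i=c\}\cap\Pi^+$ from \cref{prop:connected-level-sets}.
\end{itemize}
None of these depends on whether the index $i$ ranges over rewards or transition models. In the reward setting, $\cF$ is the same map for both $J_1$ and $J_2$, so the level-set argument applies verbatim: each $J_i$ is linear on the convex image $\cF(\Pi^+)$.

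One gap must be closed. The proof of \cref{lem:equivalence} uses an open subset $\tilde\Pi\subseteq\hat\Pi$ and connectedness of $\Pi^+$, so $\tilde\Pi$ must lie in $\Pi^+$. Since openness is taken relative to $\Pi^+$, the open set contained in $\hat\Pi$ is a subset of $\Pi^+$, and the gap closes.

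With the hypotheses in place, \cref{thm:inversion} gives $\pi,\pi'\in\hat\Pi$ with $J_1(\pi)>J_1(\pi')$ and $J_2(\pi')>J_2(\pi)$. This is exactly hackability of $(\cR_1,\cR_2)$ in the sense of \cref{def:hackability}, contradicting unhackability. Therefore non-trivial unhackable pairs are equivalent on $\hat\Pi$.

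The main obstacle is bookkeeping rather than new mathematics. One subtlety is the definition of non-triviality: \cref{thm:inversion} needs it on the open subset, while the statement assumes it on $\hat\Pi$. This is handled by analyticity. If $J_i$ were constant on an open subset of $\Pi^+$, it would be constant on all of $\Pi^+$ by the identity theorem, and then constant on its closure by continuity, hence trivial on $\hat\Pi$.
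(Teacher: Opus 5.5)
Your proposal is correct and follows essentially the paper's route: the paper obtains this statement as the contrapositive of \cref{cor:skalse}, which it reads off from \cref{thm:inversion} on the grounds that nothing in that proof depends on whether the index ranges over reward functions or transition models. Your inventory of what the argument actually uses is, if anything, more careful than the paper's one-line justification, which cites only rationality. You list analyticity, the diffeomorphism $\pi \mapsto \cF^\pi$ giving nonvanishing gradients, and connected level sets via \cref{prop:connected-level-sets}, and you use the identity theorem to carry non-triviality from $\hat\Pi$ down to the open subset.
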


As shown in \cref{prop:collinear-visit-counts}, the original statement of the following result requires an additional hypothesis not present in the original: that the visit counts of $\dPi$ are not collinear. The correction is minor and the strengthened condition is easily satisfied in practice; the core insights of \citet{skalse2022defining} are unaffected.

\begin{skalsetheorem}[Theorem 2 of \citet{skalse2022defining}, corrected]
\label{thm:skalse-thm2}
For any environment $(\cS, \cA, \cT, d_0, \_, \gamma)$, any finite set of policies $\dPi$ whose visit counts are not collinear, and any reward function $\cR_1$, there is a non-trivial reward function $\cR_2$ such that $\cR_1$ and $\cR_2$ are unhackable but not equivalent.
\end{skalsetheorem}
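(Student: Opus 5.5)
We split on whether $\cR_1$ is trivial on $\dPi$, and in the non-trivial case we obtain $\cR_2$ by moving along a line segment in reward space and stopping at the first time some strict preference of $\cR_1$ becomes a tie. Throughout, write $F_1,\dots,F_n$ for the distinct visit counts $\cF^\pi$, $\pi \in \dPi$. Let $V = \mathrm{span}\{F_i - F_j\}$; non-collinearity means exactly that $\dim V \geq 2$. For any reward $\cR$, the value $J_\cR$ is trivial on $\dPi$ iff $\cR \in V^\perp$, where $V^\perp$ is a subspace of codimension at least $2$ in $\mathbb{R}^{|\cS||\cA|}$.

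\emph{Trivial case.} If $\cR_1$ is trivial on $\dPi$, it induces no strict inequalities, so every $\cR_2$ is unhackable with it. Take $\cR_2 = F_1 - F_2$ (assuming $n \geq 2$, which non-collinearity guarantees). Then $J_{\cR_2}(F_1) - J_{\cR_2}(F_2) = \|F_1 - F_2\|^2 > 0$, so $\cR_2$ is non-trivial. A non-trivial value function cannot induce the same ordering as a trivial one, so the pair is non-equivalent.

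\emph{Non-trivial case.} Fix an auxiliary reward $\cR'$ and set $\cR_t = (1-t)\cR_1 + t\cR'$ for $t \in [0,1]$. Let $P$ be the set of ordered pairs $(i,j)$ with $\langle \cR_1, F_i - F_j\rangle > 0$. For each $(i,j) \in P$ the quantity $g_{ij}(t) = \langle \cR_t, F_i - F_j\rangle$ is affine in $t$ with $g_{ij}(0) > 0$. Suppose some $g_{ij}(1) < 0$, so at least one zero crossing occurs in $(0,1)$. Let $t^*$ be the smallest such zero over all pairs in $P$, and set $\cR_2 = \cR_{t^*}$.
\begin{itemize}
\item \textbf{Unhackability.} For $t < t^*$ every $g_{ij}$ with $(i,j)\in P$ is positive, so at $t^*$ each is $\geq 0$. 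Hence $J_{\cR_1}(\pi) > J_{\cR_1}(\pi')$ never coexists with $J_{\cR_2}(\pi') > J_{\cR_2}(\pi)$. Pairs tied under $\cR_1$ impose no constraint.
\item \textbf{Non-equivalence.} At least one pair in $P$ becomes tied under $\cR_2$, so the orderings differ, provided $\cR_2$ is non-trivial.
\end{itemize}

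The main obstacle is ensuring $\cR_2$ is non-trivial, i.e.\ $\cR_{t^*} \notin V^\perp$. This is precisely what fails in the collinear setting of \cref{prop:collinear-visit-counts}. There, with the natural choice $\cR' = -\cR_1$, all $g_{ij}$ vanish simultaneously at $t = 1/2$. Non-collinearity is used only here, through a dimension count. The set of $\cR'$ whose segment from $\cR_1$ meets $V^\perp$ lies in the cone $\{\cR_1 + s(w - \cR_1) : w \in V^\perp,\ s > 0\}$. This cone has dimension at most $\dim V^\perp + 1 < |\cS||\cA|$, so it is a Lebesgue-null set.

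We therefore take $\cR'$ in a small ball around $-\cR_1$ avoiding this null set. Near $-\cR_1$ we have $g_{ij}(1) \approx -\langle \cR_1, F_i - F_j\rangle < 0$ for every $(i,j)\in P$, so a crossing in $(0,1)$ exists. Genericity then guarantees $\cR_{t^*} \notin V^\perp$, so $\cR_2$ is non-trivial, unhackable with $\cR_1$, and not equivalent to it.
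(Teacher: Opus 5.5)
Your proof is correct and follows essentially the same route as the paper. Both arguments move from $\cR_1$ toward $-\cR_1$, stop at the first point where a strict $\cR_1$-preference becomes a tie, and use that the trivial rewards $V^\perp$ have codimension at least $2$ to guarantee that this stopping point is non-trivial. The only difference is that you make the path explicit, as a straight segment to a generic endpoint near $-\cR_1$ chosen outside the proper subspace $V^\perp + \mathrm{span}\{\cR_1\}$, whereas the paper appeals to the fact that a codimension-$2$ subspace cannot disconnect $\mathbb{R}^{|\cS||\cA|}$.
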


\begin{proof}
The proof is identical to that of \citet{skalse2022defining}. The only step requiring the strengthened hypothesis is ensuring that there is a path from $\vec{\cR}_1$ to $-\vec{\cR}_1$ along which the first reward function, where some inequality becomes an equality, is not trivial. With collinear visit counts, every pairwise value difference is a scalar multiple of a single inner product $\langle \cR, \cF^{\pi_1} - \cF^{\pi_0} \rangle$, so zeroing any one difference zeros them all. Non-collinearity guarantees three policies $\pi_a, \pi_b, \pi_c$ with $\cF^{\pi_b} - \cF^{\pi_a}$ and $\cF^{\pi_c} - \cF^{\pi_a}$ linearly independent. The set of rewards trivial on $\dPi$ must be orthogonal to both, giving a subspace of codimension at least $2$, which cannot disconnect $\mathbb{R}^{|\cS||\cA|}$. A path from $\vec{\cR}_1$ to $-\vec{\cR}_1$ therefore exists that avoids trivial rewards entirely, and the remainder of the argument proceeds unchanged.
\end{proof}

\section{Transition models for \texorpdfstring{\cref{fig:taxonomy,fig:gradients,fig:safe-horizon}}{Figures 1 to 3}}
\label{app:figure-mdp}

The MDP used in \cref{fig:taxonomy,fig:gradients,fig:safe-horizon} has $\cS = \{s_0, s_1, s_2\}$, $\cA = \{a_0, a_1\}$, $\cR(s,a) = \mathbbm{1}\{s = s_0\}$, $\gamma = 0.9$, and $d_0 = (1/3,\, 1/3,\, 1/3)$. Policies are parameterized by $\pi_\theta(a_0 \mid s) = \theta$ for all $s \in \cS$. Each transition model is specified by two stochastic matrices $P_{a_0}$ and $P_{a_1}$, where entry $(i,j)$ gives $\cT(s_j \mid s_i, a)$.

The shared transition model $\cT_1$ is
\[
P_{a_0}^{(1)} = \begin{pmatrix}
0.7 & 0.2 & 0.1 \\
0.5 & 0.3 & 0.2 \\
0.4 & 0.3 & 0.3
\end{pmatrix}, \qquad
P_{a_1}^{(1)} = \begin{pmatrix}
0.1 & 0.3 & 0.6 \\
0.1 & 0.2 & 0.7 \\
0.1 & 0.1 & 0.8
\end{pmatrix}.
\]
Under $\cT_1$, action $a_0$ concentrates transitions toward $s_0$ (the rewarding state), while action $a_1$ pushes toward $s_2$, making $J_1$ monotone increasing in $\theta$.

\paragraph{Panel (a): $\cT_2$ trivial.}
Both actions produce identical transitions, so $J_2$ is constant in~$\theta$:
\[
P_{a_0}^{(2a)} = P_{a_1}^{(2a)} = \begin{pmatrix}
0.4 & 0.3 & 0.3 \\
0.2 & 0.5 & 0.3 \\
0.3 & 0.3 & 0.4
\end{pmatrix}.
\]

\paragraph{Panel (b): $\cT_1$, $\cT_2$ equivalent.} We construct $\cT_2$ by blending $\cT_1$ toward the uniform kernel with mixing weight $\alpha = 0.3$, i.e.,
$\cT_2(s' \mid s, a) = 0.7 \cdot \cT_1(s' \mid s, a) + 0.3/|\cS|$:
\[
P_{a_0}^{(2b)} = \begin{pmatrix}
0.59 & 0.24 & 0.17 \\
0.45 & 0.31 & 0.24 \\
0.38 & 0.31 & 0.31
\end{pmatrix}, \qquad
P_{a_1}^{(2b)} = \begin{pmatrix}
0.17 & 0.31 & 0.52 \\
0.17 & 0.24 & 0.59 \\
0.17 & 0.17 & 0.66
\end{pmatrix}.
\]
This preserves the qualitative structure of $\cT_1$ (and thus the policy ordering) while attenuating the transition probabilities.

\paragraph{Panels (c) and (d): $\cT_1$, $\cT_2$ exploitable.} The transition models $\cT_2^{(c)}$ and $\cT_2^{(d)}$ were found by computational search over random transition matrices drawn from a Dirichlet distribution. Full numerical values and the search code are available in the supplementary material.

\section{Bound comparison}
\label{app:bound-comparison}

\begin{table}[h]
\centering
\begin{footnotesize}
\begin{tabular}{@{}llcccc@{}}
\toprule
Panel & Relationship & $\delta$ & Actual gap & \cref{thm:safe-horizon} bound (tight) & \cref{cor:sqrt-rule} bound (loose) \\
\midrule
(a) & Trivial       & 0.40 & 0    & 7.83 & 40 \\
(b) & Equivalent    & 0.14 & 0    & 5.58 & 14 \\
(c) & Exploitable   & 0.81 & 0.66 & 8.79 & 81 \\
(d) & Exploitable   & 0.64 & 4.35 & 8.52 & 64 \\
\bottomrule
\end{tabular}
\end{footnotesize}
\vspace{4mm}
\caption{Exploitation gaps for the transition model pairs in \cref{fig:taxonomy} at $\gamma = 0.9$. The actual gap is the largest $\varepsilon$ for which the pair is $\varepsilon$-exploitable. The \cref{thm:safe-horizon} and \cref{cor:sqrt-rule} bounds give the unique $\varepsilon$ above which each result guarantees $\varepsilon$-unexploitability. Both bounds are conservative.}
\label{tab:gaps}
\vspace{-4mm}
\end{table}

\Cref{tab:gaps} compares both bounds against the actual exploitation gaps for the transition model pairs in \cref{fig:taxonomy}. Both thresholds are conservative in every case and overestimate the actual gap. This conservatism is unavoidable in general: we prove \cref{thm:safe-horizon} is tight.


\end{document}